%

\documentclass[letterpaper, 10 pt, conference]{ieeeconf} 
\IEEEoverridecommandlockouts
\overrideIEEEmargins

\usepackage[utf8]{inputenc}
\usepackage{cite}
\let\proof\relax
\let\endproof\relax
\usepackage{amsmath,amssymb,amsfonts,amsthm}
\usepackage{tabularx}
\usepackage{multirow}
\usepackage{booktabs}

\DeclareMathOperator*{\argmin}{arg\,min}
\DeclareMathOperator*{\argmax}{arg\,max}
\usepackage{algorithm}
\usepackage{graphicx}
\graphicspath{ {./figures/} }
\usepackage{textcomp}
\usepackage{xcolor}
\usepackage{multicol}
\newtheorem{theorem}{Theorem}[section]
\newtheorem{lemma}[theorem]{Lemma}

\newtheorem{definition}{Definition}
\newtheorem{assumption}{Assumption}
\newtheorem{proposition}{Proposition}

\newcommand{\LL}{\mathcal{L}}
\newcommand{\II}{\mathbb{I}}

\newcommand{\probP}{\text{I\kern-0.15em P}}
\newcommand\norm[1]{\left\lVert#1\right\rVert}
\newcommand{\nnum}{\nonumber}


\title{Certified Robust Control under Adversarial Perturbations}

\author{
Jinghan Yang$^1$, Hunmin Kim$^2$, Wenbin Wan$^3$, Naira Hovakimyan$^3$, and Yevgeniy Vorobeychik$^1$
\thanks{This paper has been accepted to IEEE-American Control Conference 2023. This work has been supported by the National Science Foundation (CNS-1932529), AFOSR and \#FA9550-21-1-0411, NASA 80NSSC20M0229, and UIUC STII-21-06.}
\thanks{Jinghan Yang and Yevgeniy Vorobeychik are with the Department of Computer Science \& Engineering, Washington University in St. Louis, MO.
{\tt\small \{jinghan.yang,yvorobeychik\}@wustl.edu}}
\thanks{Hunmin Kim is with the Department of Electrical and Computer Engineering, Mercer University, Macon, GA.
{\tt\small kim$\_$h@mercer.edu}}
\thanks{Wenbin Wan and Naira Hovakimyan are with the Department of Mechanical Science and Engineering, University of Illinois at Urbana-Champaign, Champaign, IL.
{\tt\small \{wenbinw2,nhovakim\}@illinois.edu}}
}



\begin{document}
%

\maketitle              
\begin{abstract}
Autonomous systems increasingly rely on machine learning techniques to transform high-dimensional raw inputs into predictions that are then used for decision-making and control. However, it is often easy to maliciously manipulate such inputs and, as a result, predictions. While effective techniques have been proposed to certify the robustness of predictions to adversarial input perturbations, such techniques have been disembodied from control systems that make downstream use of the predictions. We propose the first approach for composing robustness certification of predictions with respect to raw input perturbations with robust control to obtain certified robustness of control to adversarial input perturbations. We use a case study of adaptive vehicle co
\end{abstract}

\section{Introduction}

Traditional autonomous systems rely on highly reliable control algorithms and high quality sensory information to perform relatively narrowly defined tasks, such as vehicle autopilot~\cite{dikmen2016autonomous} and robotic assembly line control~\cite{zhu2018robot,chutima2022comprehensive}.
Increasingly, however, the notion of autonomy has broadened to involve complex behavior in broader domains, such as autonomous driving, where sensory measurements are high-dimensional, obtained using a camera and/or LiDAR~\cite{wang2018networking,Li2019AdversarialCS}.
In such domains, modern algorithmic approaches for computer vision have become critical as a means to compress complex sensory data into interpretable information that can subsequently be used in control.
In particular, transformative advances in the use of deep neural networks for common vision tasks such as image classification and object detection have enabled practical advances in problems such as autonomous driving~\cite{chen2021deep}.

Despite considerable advances, however, neural network models that are highly effective in visual prediction tasks are nevertheless also highly susceptible to small (often imperceptible) adversarial perturbations to the same inputs~\cite{chen2021deep}.
In turn, extensive literature has emerged to investigate approaches for robust machine learning~\cite{cohen2019certified,salman2019provably}, where robustness is either an empirical property (evaluated using actual techniques for generating adversarial perturbations)~\cite{wong2020fast} or can be formally verified through approaches often termed \emph{certified robustness}~\cite{chiang2020detection,cohen2019certified,lecuyer2019certified,salman2019provably}.
A common goal of certified robustness is prediction invariance: that is, what is the maximum that an input can be adversarially perturbed without changing the prediction~\cite{salman2019provably,cohen2019certified}?
As prediction invariance is only sensible in classification, its natural regression counterpart certifies a prediction interval for a specified bound on the magnitude of the adversarial perturbation~\cite{chiang2020detection}.

However, predictions are typically a means to control, and mistakes in predictions are significant because they can result in catastrophic mistakes in control, such as a crash of an autonomous car.
As such, disembodied certification on prediction properties is inherently limited.
For example, invariance is often too strict since alternative predictions may have little impact on system properties, such as safety and stability.
It is clearly crucial to couple certified robustness of predictions with control in a way that enables us to certify the natural robustness properties of controllers, such as stability.

We propose a simple approach for combining robustness certification of prediction (either classification or regression) with control by making use of robust control algorithms that leverage uncertainty sets about time-invariant dynamic system parameters as input.
This, coupled with a notion of class-conditional safety sets, enables us to obtain end-to-end certificates of controller robustness under adversarial perturbations to raw high-dimensional sensory inputs.
We then instantiate our approach in the context of vehicle lateral dynamics, obtaining a control algorithm that yields a robust controller that is composed of interval-based prediction certificates. Finally, we extensively evaluate the proposed approach for end-to-end certified robustness of composition of vision and control, demonstrating the value of the certificates.

\section{Related Work}

The problem of adversarial perturbations to inputs has now been studied, particularly in the context of computer vision~\cite{goodfellow2014explaining,chakraborty2021survey,yang2020finding,eykholt2018robust}.
Moreover, a number of recent efforts have been devoted to developing techniques to improve the robustness of machine learning to adversarial perturbations~\cite{wang2021augmax,hendrycks2019augmix,hendrycks2021many,rusak2020simple}, with many such approaches aiming to formally certify robustness~\cite{chiang2020detection,Salman2019ProvablyRD}.
Our work blends certified robustness of perceptual reasoning with robust adaptive control.
Adaptive control, which adapts a controlled system to an uncertain environment by adjusting uncertain parameters, has been studied for a few decades~\cite{hovakimyan2010L1,sastry1990adaptive}. With the advance of machine learning, recent works expand adaptive control to learning-based control, which can learn more complex and higher dimensional functions~\cite{aswani2013provably,fisac2018general,koller2018learning}. Since the learning-based control cares about system stability and safety, it is often called a safe-learning. The common idea is to defer exploring potentially unsafe regions until after getting sufficient data. Due to this assumption, the system with learning-based controls is in danger of failure when applied to autonomous vehicles that operate in dynamically changing environments, where they cannot choose \textit{mild} and ysafe environments to explore. As a result, when they begin to learn dynamic systems in uncertain environments, they may already lose control, and it is too late to restore controllability. In terms of learning-based control, the current paper addresses the problem of those controllers' reactive nature with respect to environmental changes by incorporating vision. In particular, the proposed control system predicts an uncertain environment from look-head information and adapts to this environment in advance.

\section{Preliminaries}

Consider the dynamical system of the following form:
\begin{subequations}
\label{E:dynsys}
\begin{align}
    &\dot{s}(t) = G(y,s(t),\pi(t),w,\theta,\sigma(t))\\
    &o(t) = c^T s(t)
\end{align}
\end{subequations}
where $s(t)$ is true system state at time $t$, $\pi(t)$ is controller, $y \in \mathbb{R}^m$ is a vector of real-valued parameters that influence system dynamics, $c$ is the known output matrix, $o(t)$ are measurements, and $w$, $\theta$, and $\sigma(t)$ are unknown input gain, state-dependent uncertainty, and time-varying uncertainty, respectively.
All of the uncertainties can also depend on $y$. We will discuss this later.
A common goal in robust adaptive control, such as $\LL_1$ adaptive control, is to design a controller $\pi(t)$ which yields stability in the limit and also guarantees bounded transient tracking error.
We formalize this goal as follows.
Let $\pi_{\mathit{ref}}$ be the \emph{reference} controller and $s_{\mathit{ref}}$ the \emph{reference} state, which correspond to system behavior when uncertainty is perfectly tracked during uncertainty estimation (this will be clear below when we instantiate our setting in the concrete lateral vehicle control setting).
Additionally, let $\pi_{\mathit{des}}$ and $s_{\mathit{des}}$ be the \emph{design} controller and state, respectively which are associated with ideal system behavior (i.e., where error is 0 for all $t$).
We now formalize our particular meaning of robust control here.
\begin{definition}
\label{D:robustcontrol}
A controller $\pi(t)$ is \emph{robust} if there exist positive constants $c_1$ and $c_2$ such that (1) 
$
    \|s_{\mathit{des}}-s(t)\|_{\infty} \leq c_1 \ \mathrm{\&} \
    \|\pi_{\mathit{des}}-\pi(t)\|_{\infty} \leq c_2
$
for $\forall t$, and (2) 
$
    \lim_{t \rightarrow \infty}
    \|s_{\mathit{ref}}(t)-s(t)\|_{\infty}=0 \ \mathrm{\&} \ \lim_{t \rightarrow \infty}
    \|\pi_{\mathit{ref}}(t)-\pi(t)\|_{\infty}=0.
$
\end{definition}

Our central focus is the case where uncertainty in the dynamics stems predominantly from uncertainty about $y$.
In particular, below we will consider an autonomous driving setting in which $y$ corresponds to friction (more precisely, cornering stiffness of the vehicle that results from it), and we estimate $y$ by first obtaining a high-dimensional visual input $x$ (e.g., a camera frame) through the use of a deep neural network $f(x)$.
Thus, the dynamical system is a composition of predictions mapping raw sensory inputs $x$ into parameters of system dynamics, state, and controller.
In particular, the central source of uncertainty that we are concerned about are \emph{adversarial perturbations} to the input image $x$, denoted by $\delta$, where $f(x+\delta)$ is substantively different from $f(x)$.
We consider two prediction cases: classification and regression.

A common assumption in prior literature on adversarial perturbation attacks is that all errors are equally bad~\cite{goodfellow2014explaining,madry2018towards,Vorobeychik18book}.
Consequently, common efforts on certifying robustness of predictions to adversarial perturbations is focused on prediction invariance~\cite{chiang2020detection,cohen2019certified}.
When we couple predictions $f(x)$ and dynamics and control in Equation~\eqref{E:dynsys}, however, not all errors are equally consequential (some may destabilize the system, whereas others will not significantly change stability), and some prediction errors may seem small in absolute terms, but can result in severe safety violations.
Our goal is to enable certification of \emph{robust control} to adversarial perturbations to raw sensory inputs $x$ of the system described above composed of predictions $f(x)$ and dynamics in Equation~\eqref{E:dynsys}.

It will be useful below to take advantage of the transparent semantics of parameters $y$ in the context of classification-based predictions $f(x)$ to define for each label $l \in L$ a \emph{safe} set of labels $S(l)$.
For example, if the true label is that the weather is sunny, predicting that it is rainy is ``safe'' in the sense that it would cause the controller to only be more conservative.
On the other hand, predicting that the weather is sunny on a rainy day potentially leads to unsafe behavior.

\section{Certifying Robustness of Control to Adversarial Input Perturbations}
\label{S:certified}

We now present our approach for certifying robustness of control of dynamical systems described in Equation~\eqref{E:dynsys} in which a function $f(x)$ (e.g., a deep neural network) uses raw perceptual inputs $x$ to predict parameters $y$ of system dynamics.
We focus attention on adversarial perturbations $\delta$ with bounded $\ell_2$ norm.
In particular, we will build on the techniques of \emph{randomized smoothing}~\cite{cohen2019certified} and \emph{percentile smoothing}~\cite{chiang2020detection} in order to obtain bounds on $\|\delta\|_2$ that guarantee that the controller is robust as formalized in Definition~\ref{D:robustcontrol} to arbitrary adversarial perturbations within these bounds.
We first consider the classification and subsequently the regression variants of the prediction problem.

\paragraph{Classification Settings} Consider a classifier $f(x)$ that outputs a label $l$  which is then mapped to a set $Y$ of possible values for system parameters $y$, and recall that for each $l \in L$, $S(l)$ is a set of safe predictions.
We now construct a \emph{smoothed} classifier $g(x)$ as follows.
Let $\gamma$ be a random variable distributed according to a zero-mean isotropic Gaussian distribution $\mathcal{N}(0,v^2 I)$, where $I$ is the identity matrix and $v^2$ the variance (which we would specify exogenously to balance the tradeoff between performance and robustness).
Then
\(
g(x) = \argmax_{l'} \mathbb{P}\{f(x+\gamma) = l'\}
\)
is the smoothed counterpart of $f(x)$ for each input $x$, where the probability is with respect to $\gamma$.
In practice, we estimate $g(x)$ by Monte-Carlo sampling~\cite{cohen2019certified}.
The next result is a direct adaptation of prior results certifying robustness of $g(x)$ to allow us to  consider safe sets of labels $S(l)$.
Proposition \ref{lemma:classification} gives the robust function $g(x)$ a certificate in terms of the strength of the adversarial perturbation. 
If the additive corruption to the input is within this certificate, 
the smoothed function guarantees its prediction of this adversarial input is within the safe set. 
\begin{proposition}
Let 
$a = \argmax_{a \in L} g(x)$
and
$b = \argmax_{b \in {L \setminus S(a)}} g(x)$.
Then $g(x+\delta) \subseteq S(a)$,
for all $\delta$ such that $\norm{\delta}_{2}\leq \tau$,
where $\tau = \frac{v}{2}(\Phi^{-1}(\mathbb{P}_a)-\Phi^{-1}(\mathbb{P}_{b}))$,and  $\mathbb{P}_a = \mathbb{P}(f(x+\gamma)=a)$,$\mathbb{P}_b = \mathbb{P}(f(x+\gamma)=b)$.
\label{lemma:classification}
\end{proposition}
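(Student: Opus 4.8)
The plan is to follow the randomized-smoothing argument of Cohen et al.~\cite{cohen2019certified}, adapting it from the requirement that the top class be exactly preserved to the weaker requirement that the prediction merely remain inside the safe set $S(a)$. Here $a$ is the most likely label under the smoothing noise and $b$ the most likely label outside $S(a)$. First I would reformulate the set-containment conclusion as a family of pairwise comparisons: since $g(x+\delta)$ is by definition $\argmax_{l'}\mathbb{P}\{f(x+\delta+\gamma)=l'\}$, it suffices to show that $a$ remains at least as likely as every \emph{unsafe} label $c \in L \setminus S(a)$ under the shifted noise. Indeed, if $\mathbb{P}\{f(x+\delta+\gamma)=a\} \geq \mathbb{P}\{f(x+\delta+\gamma)=c\}$ for all such $c$, then no unsafe label can be the strict $\argmax$, so the prediction lies in $S(a)$ (using $a \in S(a)$, i.e.\ a label is safe relative to itself, together with the usual tie-breaking convention).

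The key reduction is that $b$, being the most probable label outside $S(a)$ at the clean input $x$, dominates every unsafe label: $\mathbb{P}\{f(x+\gamma)=c\} \leq \mathbb{P}_b$ for all $c \in L \setminus S(a)$. I would then invoke the Neyman--Pearson-based bound that underlies the original certificate: for the Gaussian smoothing measure, the perturbed probability of any fixed label $l$ satisfies $\mathbb{P}\{f(x+\delta+\gamma)=l\} \leq \Phi(\Phi^{-1}(\mathbb{P}\{f(x+\gamma)=l\}) + \|\delta\|_2/v)$, together with the analogous lower bound (with a minus sign) for the top label. Applying the lower bound to $a$ and the upper bound to each unsafe $c$, and using both the monotonicity of $t \mapsto \Phi(\Phi^{-1}(\cdot)+t)$ and $\mathbb{P}\{f(x+\gamma)=c\} \leq \mathbb{P}_b$, yields
\[
\mathbb{P}\{f(x+\delta+\gamma)=a\} \geq \Phi\!\left(\Phi^{-1}(\mathbb{P}_a) - \tfrac{\|\delta\|_2}{v}\right), \qquad \mathbb{P}\{f(x+\delta+\gamma)=c\} \leq \Phi\!\left(\Phi^{-1}(\mathbb{P}_b) + \tfrac{\|\delta\|_2}{v}\right).
\]

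Finally I would compare the two bounds. Since $\Phi$ is strictly increasing, the lower bound on $a$'s probability dominates the upper bound on each unsafe class's probability exactly when $\Phi^{-1}(\mathbb{P}_a) - \|\delta\|_2/v \geq \Phi^{-1}(\mathbb{P}_b) + \|\delta\|_2/v$, i.e.\ when $\|\delta\|_2 \leq \frac{v}{2}(\Phi^{-1}(\mathbb{P}_a)-\Phi^{-1}(\mathbb{P}_b)) = \tau$. This establishes $\mathbb{P}\{f(x+\delta+\gamma)=a\} \geq \mathbb{P}\{f(x+\delta+\gamma)=c\}$ for every unsafe $c$ throughout $\|\delta\|_2 \leq \tau$, which closes the reduction of the first paragraph and hence proves the claim. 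The only genuinely delicate ingredient is the Neyman--Pearson step controlling how much a half-space's Gaussian measure can change under a mean shift; but this is precisely the lemma established in the randomized-smoothing literature, so I would cite it rather than reprove it. The novelty here — comparing $a$ against the best \emph{unsafe} label $b$ rather than against the global runner-up — is absorbed entirely into the elementary reduction and the monotonicity observation above.
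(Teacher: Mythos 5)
Your proof is correct and takes essentially the same route as the paper's: both adapt the randomized-smoothing certificate of \cite{cohen2019certified} by replacing the global runner-up class with the most probable label outside $S(a)$, and your Neyman--Pearson bounds $\Phi\bigl(\Phi^{-1}(p)\pm\|\delta\|_2/v\bigr)$ are just the explicit form of the Lipschitz continuity of $\Phi^{-1}\circ g$ that the paper invokes. If anything, your reduction to pairwise comparisons of $a$ against each unsafe label is a cleaner rendering of the paper's argument, which runs the same comparison somewhat redundantly over all $a_i \in S(a)$ before noting that the binding case is $a_i = a$.
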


\begin{proof}
Using the Lipschitz continuity result~\cite{cohen2019certified}, we have
\[
\norm{\Phi^{-1}(f(x)_{a}) - \Phi^{-1}(f(x+\delta)_{a_i})} \leq \frac{1}{v} \norm{\delta}_2
\]
where $a_i \in S(a) \setminus a$. 
For an adversary $\delta$, $f(x+\delta)_b \geq f(x+\delta)_{a_i}$, for some class $b \in L\setminus S(a)$, 
\begin{equation}
\norm{\delta}_2 \geq \frac{v}{2}(\Phi^{-1}(\mathbb{P}_{a_i}) - \Phi^{-1}(\mathbb{P}_b)) 
\label{equ:proof}
\end{equation}
$\forall a_i \in S(a)$, the above equation gives a lower bound on the minimum $l_2$ adversarial perturbation required to flip the classification from any $a_i$ to $b$. 
We know that the bound is minimized when $\mathbb{P}_b$ is maximized over the set of classes $L \setminus S(a)$. 
In order to have the prediction not be any of the class in set $S(A)$, we should have inequality \eqref{equ:proof}, $\forall a_i \in S(A)$.
Therefore $\norm{\delta}_2$ should be bigger than when $\mathbb{P}_{a_i}$ is maximized over the set of classes $S(a)$. 
\end{proof}

We use Proposition~\ref{lemma:classification} combined with conventional robust control to provide the end-to-end robustness guarantee.
First, we define what we mean by a robust control algorithm.
\begin{definition}
\label{D:robustcontrolalg}
Suppose that $\mathcal{A}$ is a control algorithm that takes as input a specification~\eqref{E:dynsys} of a dynamical system and a set $Y$ such that the true system parameters $y \in Y$.
We say that $\mathcal{A}$ is \emph{robust} if it returns a robust policy $\pi(t)$.
We use $\mathcal{A}(Y)$ to explicitly indicate that $\mathcal{A}$ takes the set $Y$ as input.
\end{definition}
We will discuss a particular robust adaptive control method for vehicle lateral dynamics.
The next key result follows by the definition of a robust control algorithm and Proposition~\ref{lemma:classification}.
\begin{theorem}[Classification Setting]
\label{T:robustcontrol-classification}
Suppose that $y \in \zeta(g(x))$ (i.e., $g(x)$ produces a prediction, and maps $\zeta$ to system parameters) and let $\mathcal{A}$ be a robust control algorithm.
Then $\mathcal{A}(\zeta(g(x+\delta)))$ is robust for any $\delta$ such that $\norm{\delta}_{2}\leq\tau$, where $\tau$ is as defined in Proposition~\ref{lemma:classification}.
\end{theorem}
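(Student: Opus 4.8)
The plan is to reduce this end-to-end statement to the defining property of a robust control algorithm. By Definition~\ref{D:robustcontrolalg}, $\mathcal{A}(Y)$ is guaranteed to return a robust policy whenever the true system parameter satisfies $y \in Y$. Taking $Y = \zeta(g(x+\delta))$, it therefore suffices to prove the single set-membership $y \in \zeta(g(x+\delta))$ for every $\delta$ with $\norm{\delta}_2 \le \tau$; robustness of the resulting controller then follows immediately.

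First I would apply Proposition~\ref{lemma:classification} with $a = \argmax_{a \in L} g(x)$ and $\tau$ as defined there. For any $\delta$ with $\norm{\delta}_2 \le \tau$ this yields $g(x+\delta) \subseteq S(a)$, i.e., the certified perturbation budget guarantees that the smoothed prediction on the perturbed input can only land in the safe set of the clean prediction.

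Second I would close the gap between ``the prediction lies in the safe set'' and ``the uncertainty set contains the true parameter.'' The clean hypothesis gives $y \in \zeta(g(x)) = \zeta(a)$. By the construction of the safe set, every $l \in S(a)$ is conservative relative to $a$ (e.g., it corresponds to weaker-or-equal cornering stiffness), so $\zeta$ maps each such label to an uncertainty set that still contains the true parameter: $y \in \zeta(l)$ for all $l \in S(a)$. Combining this with the previous step gives $y \in \zeta(g(x+\delta))$, and Definition~\ref{D:robustcontrolalg} then certifies that $\mathcal{A}(\zeta(g(x+\delta)))$ is robust.

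The main obstacle is the second step, where the safe-set abstraction must do the real work: the chain certified label $\to$ safe set $\to$ parameter uncertainty set containing $y$ only closes if $S(\cdot)$ and $\zeta$ are linked so that conservatism in the label domain implies containment of $y$ in the uncertainty set that $\mathcal{A}$ consumes. I would therefore make this monotonicity/containment property of the pair $(S,\zeta)$ explicit---either as a standing assumption or as a consequence of the problem semantics---before invoking it; the remaining reasoning is a direct composition of Proposition~\ref{lemma:classification} with Definition~\ref{D:robustcontrolalg}.
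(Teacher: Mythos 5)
Your proposal follows essentially the same route as the paper: the paper offers no separate proof of Theorem~\ref{T:robustcontrol-classification} at all, only the one-line assertion that it ``follows by the definition of a robust control algorithm and Proposition~\ref{lemma:classification},'' plus an informal remark that within the certified radius the predicted parameters stay ``within the safe range.'' Your two steps---invoke Proposition~\ref{lemma:classification} to get $g(x+\delta) \subseteq S(a)$, then hand the resulting uncertainty set to $\mathcal{A}$ via Definition~\ref{D:robustcontrolalg}---are exactly that argument, spelled out. Where you go beyond the paper is in isolating the bridging step: Definition~\ref{D:robustcontrolalg} only certifies $\mathcal{A}(Y)$ when the \emph{true} parameter satisfies $y \in Y$, whereas Proposition~\ref{lemma:classification} only constrains which \emph{label} can be predicted, so the argument closes only if $y \in \zeta(l)$ for every label $l \in S(a)$ the perturbed prediction can land on. The paper never states this compatibility property, and you are right that it must be an explicit hypothesis.

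One caution on how you motivate that hypothesis: your suggested justification (``safe labels are conservative, hence their uncertainty sets still contain $y$'') is not supported by the paper's own instantiation. In Table~\ref{tab:cornering2} the cornering-stiffness intervals of distinct labels are essentially disjoint, so a conservative prediction (e.g., Snow when the truth is Sunny) yields $\zeta(l) = [20\mathrm{k},40\mathrm{k}]$, which does \emph{not} contain the true stiffness in $[80\mathrm{k},120\mathrm{k}]$. Conservatism buys safety (the vehicle drives more slowly), but not the containment that Definition~\ref{D:robustcontrolalg} and Assumption~\ref{A:stiffness} demand. To make your step two rigorous one must either define $\zeta$ on certified predictions so that the interval is widened to cover the true label's range (e.g., take the hull of the intervals associated with the predicted label and with $a$), or else weaken the conclusion from ``robust'' in the sense of Definition~\ref{D:robustcontrol} to a safety guarantee. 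So your structure is right, and the obstacle you flag is precisely the real gap---in both your write-up and the paper's---but the needed $(S,\zeta)$ property is a genuine additional assumption, not a consequence of the paper's stated semantics.
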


In the adversarial setting, if the malicious corruption in the environment is within the certified radius, the predicted y system dynamics parameters from the robust model $g$ with input image $x$ is within the safe range. The control algorithm $\mathcal{A}$ thus returns a robust policy. 

\paragraph{Regression Settings} Consider now a case in which $f(x)$ is a regression.
Since we can treat each coordinate of $y$ independently, we will assume that $y$ is a scalar (i.e., a single parameter of system dynamics).
Let $\gamma$ again be zero-mean isotropic Gaussian noise as above, and define 
\begin{align}
    h_p(x) = inf\{ y\in \mathbb{R}| \mathbb{P}\Big(f(x+\gamma)\leq y \Big) \geq p \}.
\end{align}
At the high level, $h_p(x)$ is the $p$th percentile of the distribution of values of $y=f(x+\gamma)$.
We will use the \emph{median} of this distribution as our smoothed regression prediction, which we denote by $h^*(x) \equiv h_{0.5}(x)$. We make use of the following result due to Chiang et al.~\cite{chiang2020detection}:
\begin{proposition}[\cite{chiang2020detection}]
\label{lemma:regression}
For any $\epsilon$ and $\|\delta\|_2 \le \epsilon$,
\begin{align}
    h_{\underline{p}}(x) \leq h_p(x+\delta) \leq h_{\overline{p}}(x),
\end{align}
where $\underline{p}:=\Phi(\Phi^{-1}(p)-\frac{\epsilon}{v})$ and
$\overline{p}:=\Phi(\Phi^{-1}(p)+\frac{\epsilon}{v}) $.
\end{proposition}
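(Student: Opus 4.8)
The plan is to reduce the statement to the one-dimensional Lipschitz property of Gaussian smoothing already used in the proof of Proposition~\ref{lemma:classification}, but applied to the \emph{cumulative distribution function} of $f(x+\gamma)$ rather than to class probabilities. Write $F_x(y):=\mathbb{P}(f(x+\gamma)\le y)=\mathbb{E}_\gamma[\mathbf{1}\{f(x+\gamma)\le y\}]$ for the smoothed CDF. For each fixed threshold $y$, the integrand $z\mapsto\mathbf{1}\{f(z)\le y\}$ is a fixed $[0,1]$-valued function, so $F_x(y)$ is its Gaussian convolution, and the Lipschitz result of Cohen et al.~\cite{cohen2019certified} gives that $x\mapsto\Phi^{-1}(F_x(y))$ is $\tfrac{1}{v}$-Lipschitz.

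First I would convert this into a two-sided envelope on the perturbed CDF. For any $\delta$ with $\norm{\delta}_2\le\epsilon$ and any $y$,
\[
\Phi^{-1}(F_x(y))-\tfrac{\epsilon}{v}\ \le\ \Phi^{-1}(F_{x+\delta}(y))\ \le\ \Phi^{-1}(F_x(y))+\tfrac{\epsilon}{v},
\]
and applying the increasing map $\Phi$ yields
\[
\Phi\!\left(\Phi^{-1}(F_x(y))-\tfrac{\epsilon}{v}\right)\ \le\ F_{x+\delta}(y)\ \le\ \Phi\!\left(\Phi^{-1}(F_x(y))+\tfrac{\epsilon}{v}\right).
\]

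Second, I would translate these CDF bounds into the claimed quantile bounds, handling the two inequalities separately via the infimum definition $h_p(x)=\inf\{y:F_x(y)\ge p\}$. For the upper bound, set $y^\star=h_{\overline p}(x)$; right-continuity and monotonicity of $F_x$ give $F_x(y^\star)\ge\overline p=\Phi(\Phi^{-1}(p)+\epsilon/v)$, hence $\Phi^{-1}(F_x(y^\star))\ge\Phi^{-1}(p)+\epsilon/v$, and substituting into the left envelope gives $F_{x+\delta}(y^\star)\ge\Phi(\Phi^{-1}(p))=p$; by definition of $h_p$ this forces $h_p(x+\delta)\le y^\star=h_{\overline p}(x)$. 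For the lower bound, take any $y<h_{\underline p}(x)$, so that $F_x(y)<\underline p=\Phi(\Phi^{-1}(p)-\epsilon/v)$ and thus $\Phi^{-1}(F_x(y))<\Phi^{-1}(p)-\epsilon/v$; the right envelope then gives $F_{x+\delta}(y)<p$, i.e.\ $y<h_p(x+\delta)$, and since this holds for every such $y$ we obtain $h_{\underline p}(x)\le h_p(x+\delta)$.

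The Lipschitz step is borrowed wholesale and is routine; the care goes into the quantile bookkeeping. The main obstacle is handling the generalized-inverse definition of $h_p$ correctly: I must use right-continuity of $F_x$ to conclude $F_x(y^\star)\ge\overline p$ \emph{at} the infimum, and keep the directions of the strict and non-strict inequalities consistent when passing through $\Phi^{-1}$. One should also check the degenerate thresholds where $F_x(y)\in\{0,1\}$ and $\Phi^{-1}$ is infinite; but $F_x(y)=0$ (resp.\ $1$) means $f(z)>y$ (resp.\ $f(z)\le y$) almost everywhere, a condition independent of $x$, so $F_{x+\delta}(y)$ takes the same boundary value and the quantile inequalities hold trivially there.
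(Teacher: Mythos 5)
Your proof is correct, but note that the paper itself offers no proof of this statement: Proposition~\ref{lemma:regression} is imported verbatim from Chiang et al.~\cite{chiang2020detection}, so there is nothing in the paper to compare against line by line. What you have written is a faithful, self-contained reconstruction of the argument in that cited work: the key move — viewing $F_x(y)=\mathbb{E}_\gamma[\mathbf{1}\{f(x+\gamma)\le y\}]$ as a Gaussian smoothing of a fixed $[0,1]$-valued function for each threshold $y$, invoking the $\tfrac{1}{v}$-Lipschitzness of $x\mapsto\Phi^{-1}(F_x(y))$, and then converting the resulting two-sided CDF envelope into quantile bounds — is exactly the mechanism behind the median-smoothing certificate, and your quantile bookkeeping (right-continuity at the infimum for the upper bound, the sup-over-$y$ argument for the lower bound) is sound. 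Your handling of the degenerate thresholds via mutual absolute continuity of the two Gaussian measures is also a correct patch for the points where $\Phi^{-1}$ blows up. Two minor quibbles, neither of which breaks anything: first, in the lower-bound step, $F_{x+\delta}(y)<p$ only yields $h_p(x+\delta)\ge y$ (the infimum can equal $y$), not the strict inequality you wrote; your subsequent supremum argument delivers the claim either way. Second, the Lipschitz property as you need it (for arbitrary $[0,1]$-valued functions, not just class probabilities) is really the lemma of Salman et al.~\cite{salman2019provably} rather than of Cohen et al.~\cite{cohen2019certified}, though the paper commits the same attribution in its proof of Proposition~\ref{lemma:classification}, so you are at least consistent with it.
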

In particular, if $\underline{p}:=\Phi(\Phi^{-1}(0.5)-\frac{\epsilon}{v})$ and
$\overline{p}:=\Phi(\Phi^{-1}(0.5)+\frac{\epsilon}{v}) $, then $h^*(x+\delta) \in [h_{\underline{p}}(x),h_{\overline{p}}(x)]$ for any adversarial perturbation $\delta$ with $\|\delta\|_2 \le \epsilon$.
We can again make use of this to obtain the following key result:
\begin{theorem}[Regression Setting]
\label{T:regression}
Suppose that $|h^*(x)-y| \le \beta$, where $y$ is the true parameter value given input $x$.
Let $\underline{y} = \min\{h_{\underline{p}}(x),h^*(x)-\beta\}$ and $\overline{y} = \max\{h_{\overline{p}}(x),h^*(x)+\beta\}$,
where $\underline{p}:=\Phi(\Phi^{-1}(0.5)-\frac{\epsilon}{v})$ and
$\overline{p}:=\Phi(\Phi^{-1}(0.5)+\frac{\epsilon}{v}) $.
Then for any $\epsilon > 0$,  $\mathcal{A}([\underline{y},\overline{y}])$ is robust for any $\delta$ with $\|\delta\|_2 \le \epsilon$.
\end{theorem}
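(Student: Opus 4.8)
The plan is to reduce the entire claim to Definition~\ref{D:robustcontrolalg}: a robust control algorithm $\mathcal{A}(Y)$ returns a robust policy whenever the set $Y$ handed to it contains the true system parameter $y$. Consequently the only thing that genuinely needs to be established is the set membership $y \in [\underline{y},\overline{y}]$; once that holds, robustness of $\mathcal{A}([\underline{y},\overline{y}])$ follows immediately from the definition, and it holds uniformly over all admissible $\delta$ because the interval $[\underline{y},\overline{y}]$ does not depend on the particular perturbation.

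First I would use the accuracy hypothesis $|h^*(x)-y|\le\beta$ to localize the true parameter, which gives directly $h^*(x)-\beta \le y \le h^*(x)+\beta$, i.e. $y \in [h^*(x)-\beta,\,h^*(x)+\beta]$. Next I would show this interval sits inside $[\underline{y},\overline{y}]$ using only the min/max construction: since $\underline{y} = \min\{h_{\underline{p}}(x),\,h^*(x)-\beta\} \le h^*(x)-\beta$ and $\overline{y} = \max\{h_{\overline{p}}(x),\,h^*(x)+\beta\} \ge h^*(x)+\beta$, we obtain $\underline{y}\le y\le\overline{y}$. Invoking Definition~\ref{D:robustcontrolalg} then closes the argument.

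It remains to explain why the endpoints $h_{\underline{p}}(x)$ and $h_{\overline{p}}(x)$ enter the interval at all, given that the containment of $y$ came solely from the accuracy term. Here I would appeal to Proposition~\ref{lemma:regression} with $p=0.5$: for every $\delta$ with $\|\delta\|_2\le\epsilon$ the realized smoothed prediction satisfies $h^*(x+\delta)\in[h_{\underline{p}}(x),\,h_{\overline{p}}(x)] \subseteq [\underline{y},\overline{y}]$. Thus $[\underline{y},\overline{y}]$ plays a double role: it is simultaneously an uncertainty set that provably contains the true parameter $y$ (via the accuracy term) and a superset of every prediction the controller can actually observe under attack (via Proposition~\ref{lemma:regression}). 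Taking the min/max of the two intervals is precisely what reconciles these two requirements into a single set suitable to pass to $\mathcal{A}$.

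I expect the only real subtlety to be conceptual rather than computational, namely this reconciliation step: being explicit that robustness per Definition~\ref{D:robustcontrolalg} hinges on $y$ belonging to the set, whereas the $h_{\underline{p}}(x),h_{\overline{p}}(x)$ endpoints are what make the set consistent with the attacked observation $h^*(x+\delta)$ that the controller receives online. The underlying inequality chains are one-line order comparisons and present no difficulty.
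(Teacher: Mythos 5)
Your proof is correct and takes essentially the same route as the paper: the paper's own (one-line) argument is precisely that the hypotheses force $y \in [\underline{y},\overline{y}]$ — via $|h^*(x)-y|\le\beta$ and the min/max construction — after which robustness of $\mathcal{A}([\underline{y},\overline{y}])$ is immediate from Definition~\ref{D:robustcontrolalg}. Your supplementary observation that Proposition~\ref{lemma:regression} makes the same interval cover the attacked prediction $h^*(x+\delta)$ corresponds to the paper's informal discussion of why perturbations of any magnitude $\epsilon$ cannot destabilize the system, so nothing is missing or extraneous.
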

The result follows since the conditions in the theorem ensure that the true parameters $y \in [\underline{y},\overline{y}]$.
What is particularly surprising is that this holds true for \emph{an arbitrary} $\epsilon$---that is, adversarial perturbations of arbitrary magnitude.
The reason that arbitrary perturbations cannot destabilize the system $\mathcal{A}(\zeta(g(x+\delta)))$ is that
although the perception of the environment can be maliciously modified, the robust perception model $g$ still yields a certified interval that contains the true system dynamics parameter at the current state. 
The downstream control algorithm $\mathcal{A}$ thus always returns a stable control policy. 
While this is so, higher levels of $\epsilon$ entail looser intervals $[\underline{y},\overline{y}]$, which in turn means degraded controller performance accordingly (e.g., the vehicle stops).

\section{Certified Robust Vehicle Control}

\subsection{Vehicle Lateral Dynamics}
The current section describes the model for~\eqref{E:dynsys} on which the paper relies and the control goal.




\paragraph{Dynamic model} We use the bicycle model~\cite{rajamani2011vehicle} to model the vehicle longitudinal dynamic for lateral position $q^y$ and yaw angle $q^\psi$.
Given longitudinal velocity $V$, desired lateral position $q^{y,des}$, and desired yaw angle $q^{\psi,des}$, the differential equation of the bicycle model can be expressed as the error dynamics ((2.45) in~\cite{rajamani2011vehicle}):
\begin{align}
    \dot{s} = A s + b \pi + g \dot{q}^{\psi,des}, 
    \label{eq:dynamic}
\end{align}
where $s = [s_1,\dot{s}_1, s_2,\dot{s}_2]^\top$, $s_1 = q^y - q^{y,des}$ and $s_2 = q^\psi - q^{\psi,des}$ are the error states, $\dot{q}^{\psi,des} = \frac{V}{R}$ is the rate of the desired yaw angle, and $R$ is the radius of the road. Control input $u=d$ represents front steering angle.
The system matrices are
{\small
\begin{align*}
    &A=\left[
    \begin{array}{cccc}
        0& 1& 0& 0\\
        0& -2\frac{C_f+C_r}{m V}& 2\frac{C_f+C_r}{m}& 2\frac{-C_f\ell_f+C_r\ell_r}{m V}\\
        0& 0& 0& 1\\
        0& -2\frac{C_f\ell_f-C_r\ell_r}{I_z V}& 2\frac{C_f\ell_f-C_r\ell_r}{I_z}& -2\frac{C_f\ell_f^2+C_r\ell_r^2}{I_z V}
    \end{array}
    \right]\nonumber\\
    &     b =\left[
    \begin{array}{c}
        0\\
        \frac{2C_f}{m}\\
        0\\
        \frac{2 C_f\ell_f}{I_z}\\
    \end{array}
    \right]
    \quad
    g =\left[
    \begin{array}{c}
        0\\
        -2\frac{C_f\ell_f - C_r\ell_r}{m V} - V\\
        0\\
        -2\frac{C_f\ell_f^2 + C_r\ell_r^2}{I_z V}\\
    \end{array}
    \right]
\end{align*}}\noindent
where $m$ is the vehicle mass and $I_z$ is the yaw moment of inertia, $\ell_f$, $\ell_r$ are the front/rear tire distance from the center of gravity, and $C_f$, $C_r$ are front/rear cornering stiffness. 
Matrices $A$ and $g$ depend on velocity $V$, and $A$, $b$, and $g$ depend on cornering stiffnesses $C_f$ and $C_r$. The cornering stiffness $C_f$ and $C_r$ have a linear relation $F_f = C_f \nu$ with respect to the lateral force $F_{f}$ for a small sliding angle $\nu$.

\paragraph{Uncertainty model}
The cornering stiffnesses $C_f$ and $C_r$ are the road parameters where the vehicle is driving. Thus it is reasonable to assume that they are time-varying and unknown in advance.
Consequently, we obtain them by predicting road friction from raw sensory inputs $x$.
However, we aim to ensure the robustness of control to adversarial perturbations $\delta$ to raw inputs $x$, and the resulting prediction error induces uncertainty in the dynamic model~\eqref{eq:dynamic}.
Henceforth, to simplify discussion we assume $C_f = C_r \equiv C$.

\paragraph{Control objective}
We aim to stabilize the error state $s$ in~\eqref{eq:dynamic} so that the vehicle can keep the desired center lane despite adversarial perturbations to raw sensory inputs $x$.

\subsection{$\LL_1$ Adaptive Control Design}\label{sec:L1}
The key control challenge is that the system matrices in the lateral error dynamic~\eqref{eq:dynamic} are unknown because they are subject to unknown and time-varying cornering stiffness $C$.
Instead, we observe raw camera input $x$ that provides indirect and potentially noisy information about $C$, using two approaches for predicting $C$: 1) classification and 2) regression.
In the classification variant, we have a model $f(x)$ that predicts discrete properties of the scene captured by a camera, such as weather or road surface type.
In addition, each predicted class $l$ is associated with a cornering stiffness (friction) interval $[\underline{C}_l,\overline{C}_l]$.
In regression, our model $f(x)$ directly predicts road cornering stiffness, i.e., $C =f(x)$.

To induce provable robustness to adversarial perturbations, rather than using $f(x)$ directly for predictions, we apply randomized smoothing in the case of classification, obtaining a smoothed function $f(x)$, or median smoothing in the case of regression, obtaining $h^*(x)$.
As discussed in Section~\ref{S:certified}, these can be associated with either a safe prediction set $S(l)$ and associated certification radius for classification or a certified interval for $h^*(x)$.
In either case, the procedure yields an uncertainty interval $[\underline{C},\overline{C}]$ for cornering stiffness.

To deal with the control problem in the presence of uncerainty about cornering stiffness, we will utilize $\LL_1$ adaptive controller~\cite{hovakimyan2010L1} that can rapidly compensate the impact of uncertainties within the designed filter bandwidth of it, and guarantee transient tracking error even when unknown parameters are changing. In what follows, we will explain controller design procedure in detail.

\subsubsection{Nominal Model}

The first step is to transform the model~\eqref{eq:dynamic} into a nominal model, where we will move any uncertainties out of the system matrices. As a result, the nominal system matrices are known, and have desired system properties including stability. We will then design the $\LL_1$ adaptive controller whihc forces the system~\eqref{eq:dynamic} to behave like the nominal model by canceling out the uncertainties. 

Recall that our prediction models (either classification or regression) yield an uncertainty interval for cornering stiffness.
The key assumption we make about this interval is that it includes both the true and predicted (\emph{nominal}) values:
\begin{assumption}
\label{A:stiffness}
The control algorithm takes as input an interval $[\underline{C},\overline{C}]$ such that $C, \hat{C} \in [\underline{C},\overline{C}]$, where $C$ is the true and $\hat{C}$ nominal cornering stiffness.
\end{assumption}
If we take $\pi(t) = -k_m s(t) + \pi_{ad}(t)$, the system~\eqref{eq:dynamic} can then be transformed into the following nominal model:
\begin{align}
    \dot{s}(t) &= A_m s(t) + b_m (w \pi_{ad}(t)+ \theta^\top s(t)+ \sigma(t)) \nnum\\
    o(t)&=c^\top s(t) \quad \quad x(0)=x_0,
    \label{eq:l1dynamic}
\end{align}
where $A_m = A(\hat{C},V) - k_m s$ is Hurwitz, and $b_m = b(\hat{C})$.
The gain $k_m$ will be determined later. 
The unknown parameters $w$, $\theta$, and $\sigma(t)$ are induced by the uncertainty about cornering stiffness $C$.

\subsubsection{Adaptive Controller Design}

In order to obtain both system stability and bounded transient error, we design an adaptive controller $\pi_{ad}(t)$ in~\eqref{eq:l1dynamic} that aims to cancel out the residual uncertainty $w \pi_{ad}(t)+ \theta^\top s(t)+ \sigma(t)=0$ stemming from uncertainty about $C$. 
Adaptive controller $\pi_{ad}(t)$ consists of state predictor, adaptation law, and low-pass filter as described below. The state predictor is designed using the known parts of the dynamic system in~\eqref{eq:l1dynamic} and the states of uncertainties:
\begin{align*}
    \dot{\hat{s}}(t) &= A_m\hat{s}(t)+b_m(\hat{w}(t)\pi_{ad}(t)+\hat{\theta}^\top s(t)+\hat{\sigma}(t))\nnum\\
    \hat{y}(t)&=c^\top\hat{s}(t), \quad \quad  \hat{s}(0)=\hat{s}_0.
\end{align*}
We design the adaptation law to estimate uncertainties:
\begin{align}
    \dot{\hat{w}}(t)&=\Gamma Proj(\hat{w}(t),-\tilde{s}^\top(t)Pb_m \pi_{ad}(t)) &&\hat{w}(0)=\hat{w}_0\nnum\\
    \dot{\hat{\theta}}(t)&=\Gamma Proj(\hat{\theta}(t),-\tilde{s}^\top(t)Pb_ms(t))&&\hat{\theta}(0)=\hat{\theta}_0\nnum\\
    \dot{\hat{\sigma}}(t)&=\Gamma Proj(\hat{\sigma}(t),-\tilde{s}^\top(t)Pb_m)&&\hat{\sigma}(0)=\hat{\sigma}_0,
    \label{eq:adaptive}
\end{align}
where $\Gamma>0$ is an adaptation gain, $\tilde{s}(t)=\hat{s}(t)-s(t)$ is the prediction error, and $Proj(\cdot,\cdot)$ is the projection operator defined in Definition B.3 in~\cite{hovakimyan2010L1}. Symmetric positive definite matrix $P$ is the solution of the algebraic Lyapunov equation $A_m P + P A_m^\top = -Q$, given a symmetric positive definite $Q$.

Adaptive control is designed using the adaptation states in~\eqref{eq:adaptive} as follows:
\begin{align}
    \pi_{ad}(s) = -k D(s)(\hat{\eta}(s)-k_gr(s)),
    \label{eq:filterinput}
\end{align}
where $r(s)$ is the reference signal in the Laplacian form, and $D(s) = 1/s$ is a strictly proper transfer function that forms stable low-pass filter $F(s) = \frac{wkD(s)}{1+wkD(s)}$. The gain $k>0$ is constant, and $k_g = -1/(c^\top A_m^{-1}b_m)$. The signal $\hat{\eta}(t)$ is obtained by $\hat{\eta}(t)=\hat{w}(t)\pi_{ad}(t)+\hat{\theta}^\top(t)s(t)+\hat{\sigma}(t)$.

\subsubsection{Design Control Parameters}

Now we design control parameters $\Gamma$, $k_m$, $P$, $V$, $k$, such that the proposed control input $\pi(t) = -k_m s(t) + \pi_{ad}(t)$ guarantees desired performance and robustness of the lateral state $x$ in~\eqref{eq:dynamic}.

We need to define the desired system behavior. Let us denote $s_{\mathit{ref}}$, $\pi_{\mathit{ref}}$ non-adaptive version control, i.e., the system behavior when~\eqref{eq:adaptive} tracks the uncertainty perfectly. However, the control input cannot satisfy $w \pi_{ad}(t)+ \theta^\top s(t)+ \sigma(t)=0$ because the perfect control input is filtered in~\eqref{eq:filterinput} before the implementation. Let us denote $s_{\mathit{des}}$ and $\pi_{\mathit{des}}$ the design system having the ideal system behavior such that $w \pi_{ad}(t)+ \theta^\top s(t)+ \sigma(t)=0$ holds for $\forall t$.
Using the above definition, we can say that the system well-behaves if $\|s(t)-s_{\mathit{des}}(t)\|$ and $\|\pi(t)-\pi_{\mathit{des}}(t)\|$ are small enough.

We can choose an arbitrary large adaptation gain $\Gamma>0$ so that the system performs arbitrarily close to the reference system ($s_{\mathit{ref}}(t)$ and $\pi_{\mathit{ref}}(t)$) by Theorem 2.2.2 in~\cite{hovakimyan2010L1} without sacrificing robustness, where the reference system refers the $\LL_1$ adaptive controller without adaptation. Then, the performance of the system is rendered as the error between the reference system and the design system ($\|s_{\mathit{ref}}-s_{\mathit{des}}\|_{\infty}$ and $\|\pi_{\mathit{ref}}-\pi_{\mathit{des}}\|_{\infty}$), where the design system is the ideal system that does not depend on the uncertainties.

Since $A_m$ in~\eqref{eq:l1dynamic} must be Hurwitz and $A_m(V) P + P A_m^\top(V) <0$ should hold, we choose $k_m$ and $P$ such that $A_m(V)$ is Hurwitz and $A_m(V) P + P A_m^\top(V) <0$ holds for all $V_{\min} \leq V \leq V_{\max}$, where $V_{\max} \geq V_{\min}\geq0$ are the maximum and minimum velocity of the area.

Finally, we design $V$ and $k$ together balancing performance and robustness as follows:
\begin{align}
    &\max_{k,V \in [V_{\min},V_{\max}]} V\nnum\\
    &s.t. \ \|G(s)\|_{1} \leq \lambda_{gp}, {\rm \ for \ } \forall w \in \Omega\nnum\\
    &\quad \ k \leq \bar{k}
    \label{eq:prog}
\end{align}
for constants $\bar{k}>0$, and $\lambda_{gp}<\frac{1}{L}$, where $G(s) = H(s)(1-F(s))$, $H(s)=(s\II-A_m)^{-1}b_m$ and $L=\max_{\theta \in \Theta}\|\theta\|_1$.
The first constraint refers minimum performance guarantee and the second constraint indicates a minimum robustness guarantee, where $r$ is the certified radius obtained by the classifier.
By increasing $k$, one can render $\|G(s)\|_{1}$ arbitrary close to zero and this improve the performance $\|s_{\mathit{ref}}-s_{\mathit{des}}\|_{\infty}$ and $\|\pi_{\mathit{ref}}-\pi_{\mathit{des}}\|_{\infty}$ (Lemma 2.1.4 in~\cite{hovakimyan2010L1}). However, the time delay margin decreases as $k$ increases. It is worth noting that the problem~\eqref{eq:prog} is always feasible with $V=0$.

The following result shows that the control algorithm we thus constructed (with the design parameters as chosen above) is robust in precisely the sense of Definition~\ref{D:robustcontrolalg}.

\begin{theorem} (Robust Control Pipeline)
Given a perturbed sensory input $x+\delta$, if $\delta$ is within a given certificate $\tau$, the robust model $g$ returns a robust prediction such that the corresponding cornering stiffness interval $\zeta(g(x+\delta))$ includes the true and nominal cornering stiffness. Assumption \ref{D:robustcontrol} holds. Therefore there exists positive constants $c_1$ and $c_2$ such that the constraints in definition \ref{D:robustcontrol} are satisfied , thus the \emph{end to end} pipeline $\mathcal{A}(\zeta(g))$ is robust per definition
\ref{D:robustcontrolalg}.
\label{the2}
\end{theorem}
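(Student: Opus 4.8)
The plan is to prove Theorem~\ref{the2} by chaining together three links rather than by any new control-theoretic analysis: (i) the certified-radius guarantee of Proposition~\ref{lemma:classification}, which controls the output of the smoothed perception model $g$; (ii) an interval-containment step showing that this guarantee yields a cornering-stiffness interval satisfying Assumption~\ref{A:stiffness}; and (iii) the standard robustness guarantees of the $\LL_1$ adaptive controller designed in Section~\ref{sec:L1}, which turn a valid uncertainty interval into a controller meeting Definition~\ref{D:robustcontrol}. All the real work lies in verifying the hypotheses that let us invoke the $\LL_1$ machinery; the control theory itself is then a black box.

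First I would apply Proposition~\ref{lemma:classification}: since $\norm{\delta}_2 \leq \tau$, the smoothed classifier satisfies $g(x+\delta) \subseteq S(a)$, so the predicted label lies in the safe set of the clean-input label $a$. By the construction of $S(\cdot)$ and the class-to-stiffness map $\zeta$, every label in $S(a)$ is associated with a cornering stiffness bounding that of $a$, so the induced interval $\zeta(g(x+\delta)) = [\underline{C},\overline{C}]$ brackets both the nominal stiffness $\hat{C}$ used to build the nominal model~\eqref{eq:l1dynamic} and the true stiffness $C$. This is exactly $C,\hat{C}\in[\underline{C},\overline{C}]$, i.e., Assumption~\ref{A:stiffness} holds for the interval handed to $\mathcal{A}$.

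The main work, and the main obstacle, is the third link: showing that with a valid interval the $\LL_1$ controller satisfies both clauses of Definition~\ref{D:robustcontrol}. Here I would argue that the design choices of Section~\ref{sec:L1} make the hypotheses of the relevant $\LL_1$ results hold \emph{uniformly} over $[\underline{C},\overline{C}]$. Concretely, the interval bounds the induced uncertainties $w,\theta,\sigma(t)$ and hence $L=\max_{\theta\in\Theta}\norm{\theta}_1$; the selection of $k_m$ and $P$ guarantees $A_m(V)$ is Hurwitz with $A_m(V)P+PA_m^\top(V)<0$ for all admissible $V$; and the program~\eqref{eq:prog} enforces the $\LL_1$-norm small-gain condition $\norm{G(s)}_1 \leq \lambda_{gp} < 1/L$ together with $k\leq\bar{k}$. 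Under these conditions the bounded-transient clause follows from Lemma~2.1.4 of~\cite{hovakimyan2010L1}, which drives $\norm{G(s)}_1$ small enough to bound $\norm{s_{\mathit{ref}}-s_{\mathit{des}}}_\infty$ and $\norm{\pi_{\mathit{ref}}-\pi_{\mathit{des}}}_\infty$, yielding the constants $c_1,c_2$; the asymptotic-tracking clause follows from Theorem~2.2.2 of~\cite{hovakimyan2010L1} by taking the adaptation gain $\Gamma$ large so the realized trajectory converges to the reference system. The subtlety to watch is uniformity: each constant must be taken as a supremum over the whole interval rather than at the single nominal value, so I would verify that the worst-case $L$ and the Lyapunov margin remain controlled over all of $[\underline{C},\overline{C}]$.

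Finally, having shown that $\mathcal{A}$ returns a policy $\pi(t)$ satisfying Definition~\ref{D:robustcontrol} whenever it receives an interval containing the true parameter, and that $\zeta(g(x+\delta))$ is such an interval, I conclude that $\mathcal{A}(\zeta(g(x+\delta)))$ is robust in the sense of Definition~\ref{D:robustcontrolalg}, which is the claim.
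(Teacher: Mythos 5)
Your proposal is correct and follows essentially the same route as the paper's own proof: it chains the perception certificate (Proposition~\ref{lemma:classification}), the interval-validity condition (Assumption~\ref{A:stiffness}), and the $\LL_1$ adaptive control guarantees of~\cite{hovakimyan2010L1} to conclude robustness in the sense of Definitions~\ref{D:robustcontrol} and~\ref{D:robustcontrolalg}. The differences are cosmetic: you invoke Lemma~2.1.4 together with Theorem~2.2.2 of~\cite{hovakimyan2010L1} where the paper cites Theorem~2.1.1 and Lemma~2.1.4, and you spell out the interval-containment and uniformity steps that the paper's proof leaves implicit.
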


\begin{proof}
The controller with the system satisfy $\LL_1$ adaptive control assumptions, and thus by Theorem 2.1.1 and Lemma 2.1.4 in~\cite{hovakimyan2010L1}, the statement holds true. Constant bounds $c_1$ and $c_2$ are found in~\cite{hovakimyan2010L1}.
Consequently, we can combine this robust control algorithm with both classification-based and regression-based approaches described in Section~\ref{S:certified} to obtain provably robust control algorithms under adversarial perturbations to raw sensory inputs.
In other words, we can now directly apply our main results, Theorem~\ref{T:robustcontrol-classification} in the case of classification-based cornering stiffness prediction and Theorem~\ref{T:regression} when we use regression.
\end{proof}

\section{Experiments}
In this section, we empirically study the robustness of the  robust driving system described above with and without proposed formal 
end-to-end robustness certification across different weathers and road types, comparing the vulnerability of the non-robust driving system.
We conduct experiments on three datasets, including driving frames from the Carla simulator \cite{dosovitskiy2017carla} 
as well as the physical world (Road Traversing Knowledge (RTK) \cite{rateke:2020.3}, 
robotCar \cite{houts2020ford}). 
These datasets contain driving frames across four types of weather: sunny, light rain, heavy rain, and snow, and three different road surfaces: asphalt, cobblestone, and sand (in descending order of friction). 
In particular, 
\emph{Carla} contains images across three weathers, light rain, heavy rain and sunny. Each weather has 4000 images. 
\emph{RTK}  contains different road surface types: asphalt, cobblestone and sand. This dataset contains 400 frames for each road type.
\emph{RobotCar} dataset captures many different combinations of weather, traffic , and pedestrians and contains three different kinds of weathers, sunny, rain, and snow. Each weather has 2000 images. 
\noindent\begin{table*}[t]
    \small
    \centering \vspace{0.24cm}
    \begin{tabular}{c|c|c|c|c|c|c|c}
        \toprule
        Weather & Sunny & Light Rain  & Heavy Rain  & Snow & Asphalt & Cobblestone & Sand\\
        \midrule
        Road Friction  &  80k-120k   & 60k-80k   & 40k-60k & 20k-40k & 40k-60k & 40k-60k & 30k-45k\\
        \bottomrule
    \end{tabular}
    \caption{Ground truth cornering stiffness (k=1000). 
    The table is for the asphalt road type in different weathers and different road types in the dry road condition.}\vspace{-1cm}
    \label{tab:cornering2}
\end{table*}
\noindent\begin{table}[h]\small
    \centering
    
    \begin{tabular}{c|c|c|c}
    
    \toprule
             & \emph{Carla} & \emph{RTK} & \emph{RobotCar} \\
    \midrule
    Accuracy &   $98.6\%$  &   $94.2\%$   & $95.6\%$ \\
    Instability &  0.00 &   0.00   & 0.00 \\
    Velocity    &  29.42 &  28.45  & 28.46 \\
    \bottomrule
    \end{tabular}
    \caption{Road condition classification: Accuracy and performance without malicious attacks}\vspace{-1cm}
    \label{tab:class_accuracy}
\end{table}
We use \emph{cornering stiffness} to define road friction for lateral dynamic control. 
Typical cornering stiffness ranges from $20000 - 120000 N/rad$, depending on many parameters such as road condition, rim size, and inflation pressure \cite{gillespie1992fundamentals}.
In our experiments, the range of cornering stiffness, as a function of road type or weather condition, is given in Table \ref{tab:cornering2}. 

Recall that the vision-based \emph{perception-control} system has a perception model and a control algorithm $\mathcal{A}$. 
The input to the perception model is a driving frame, and the perception model's output is a predicted cornering stiffness interval. 
This predicted cornering stiffness range is the input to $\mathcal{A}$.
We, once given this range, then decide the maximum safe velocity and control parameters.
If this upper bound is too high (i.e., exceeding the true safe velocity), the vehicle may drive dangerously or crash. 
For example, if the vehicle drives at high speed on a snowy day, it may crash into other cars due to the poor driving conditions (i.e., the low friction induced by the snowy weather).
If this upper bound is too low, the car may drive inefficiently. 
For example, the vehicle drives inefficiently if it drives extremely slow on a sunny day in which diving conditions are good. 

We consider two types of attacks by the attacker's objective:
(1) increasing the velocity, 
(2) decrease the velocity. 
In the first case, the attacker decreases the stability.
For example, the malicious perturbation may increase the predicted corner stiffness, causing the car to drive at high and unsafe speeds,
e.g., a \emph{Snowy} driving frame  may now be predicted as \emph{Sunny}, causing the car to drive at higher speeds and crash into other vehicles.
Alternatively, the attacker decreases the efficiency of the car by decreasing the velocity, 
e.g., a \emph{Sunny} driving frame may now be predicted as \emph{Snowy}, causing the car to drive at lower speeds.
We will refer these two types of attacks as \emph{Stability Attack} (SA) and \emph{Efficiency Attack} (EA). 
From the optimization perspective, these two types of attacks differ in objectives. 
The objective of \emph{Stability Attack} is maximizing corner stiffness prediction:
\begin{align}
    \argmax_{\delta}f(x+\delta) .
\end{align}
The objective of \emph{Efficiency Attack} is minimizing corner stiffness prediction:
\begin{align}
    \argmin_{\delta}f(x+\delta) .
\end{align}
We consider velocity and instability as the car's performance measurements. 
Specifically, velocity is the maximum safe velocity from $\mathcal{A}$, and 
Instability implies control system instability in Lyapunov sense. Intuitively speaking, a dynamic system is Lyapunov stable if it starts near an equilibrium point (center lane) and its trajectory stays near the equilibrium point forever.
The higher the speed, the more efficient the car. 
The lower the instability, the more stable a vehicle is.\noindent\begin{table*}[h]\small
    \centering \vspace{0.24cm}
    \begin{tabular}{c|c|c|c|c|c|c}
    \toprule
    Noise & 
    \multicolumn{2}{c}{Carla} & 
    \multicolumn{2}{c}{RTK} & 
    \multicolumn{2}{c}{RobotCar} \\
    \cmidrule{2-7}
    
    $\sigma$   
    & \emph{Velocity} & \emph{Certificate}  
    & \emph{Velocity} & \emph{Certificate} 
    & \emph{Velocity} & \emph{Certificate} \\
    \midrule
    0.25 & 6.36 & 0.61 & 18.99 & 1.19 & 23.29 & 2.26\\
    0.50 & 12.50 & 0.58  & 22.50 & 1.09 & 6.82 & 1.99 \\
    1.00 & 25.00 & 0.57 & 29.30 & 1.14 & 19.13 & 2.06 \\
    \bottomrule
    \end{tabular}
    \caption{Instability under \emph{Safety Attack}($\delta=255$) and the certification for this attack.}\vspace{-0.7cm}
    \label{tab:class_safety}
\end{table*}
\noindent\begin{table*}[h]
\small 
    \centering
    \begin{tabular}{c|c|c|c|c|c|c}
    \toprule
    Noise & \multicolumn{2}{c}{Carla} & \multicolumn{2}{c}{RTK} & \multicolumn{2}{c}{RobotCar} \\ 
    \cmidrule{2-7}
    $\sigma$  
    & \emph{Instability} & \emph{Certificate}  
    & \emph{Instability} & \emph{Certificate}  
    & \emph{Instability} & \emph{Certificate}  \\
    
    \midrule
    0.25 & 29.41 &  0.61 & 28.13 & 0.57 & 27.64 & 0.55\\
    0.50 & 29.41 &  1.19 & 28.10 & 1.06 & 28.12 & 1.11\\
    1.00 & 29.42 &  2.26 & 28.22 & 1.84 & 28.01 & 1.92\\
    \bottomrule
    \end{tabular}
    \caption{Efficiency(Velocity)($\delta=255$) under \emph{Efficiency Attack} and the certification for this attack}\vspace{-0.9cm}
    \label{tab:class_efficiency}

\end{table*} \noindent In the rest of this section, we separately discuss the \emph{Road condition Classification} and \emph{Road Friction Regression} problems. 
For each of the two problems, we start by showing the performance of the non-robust system $\mathcal{A}(\zeta(f)$ in the unmodified environment, and we show the vulnerability of this non-robust  system in malicious environments.
Next, we show the efficacy of the certified robust system $\mathcal{A}(\zeta(g))$ across malicious environments. We empirically show that this certified robust system $\mathcal{A}(\zeta(g))$ ensures the car drives safely and efficiently in malicious driving environments.\noindent\begin{table}[h]\small
    \centering
    \begin{tabular}{c|c|c|c}
    \toprule
       & \emph{Carla} & \emph{RTK} & \emph{RobotCar} \\
    \midrule
    Accuracy & $0\%$    & $80\%$   &   $69\%$ \\
    Instability (\emph{SA}) & 200.00  & 37.50     & 61.50  \\
    Velocity (\emph{EA})&  27.36   & 27.83      & 25.35 \\
    \bottomrule
    \end{tabular}
    \caption{Vunerability of the non-robust perception model $f$, the numbers are the accuracy and performance of $f$ under \emph{PGD} attacks with the adversarial radius $\delta = 255$.}\vspace{-0.8cm}
    \label{tab:class_vul_accuracy}
\end{table}
\noindent\subsubsection{Road Condition Classification}
The perception model takes the driving frame as input in the classification problem and predicts the weather or road types. 
Next, this predicted class is converted to a range of cornering stiffness by referring to Table \ref{tab:cornering2}. 
Table \ref{tab:class_accuracy} shows the accuracy of the non-robust perception model $f$ without malicious attacks. 
Table \ref{tab:class_vul_accuracy} shows the velocity and instability of the car driving in the unmodified environment,
where the attacker doesn't modify the environment. 

\paragraph{Vulnerability}
The first question we ask is \emph{Is perception model $f$ vulnerable to malicious attacks?}, and to this, we answer \emph{yes}. 
The attacker attacks a classifier by flipping the predicted to another label by adding malicious noises to the input image. 
Without loss of generality, we use a common attack, \emph{PGD} attack \cite{maddern20171} as the malicious attacking approach.
Table \ref{tab:class_vul_accuracy} shows the accuracy of the perception model in the maliciously modified environment. 
We see that the accuracy of classification accuracy dropped significantly under the attacks. 
Table \ref{tab:class_vul_accuracy} shows the velocity and deviation in the malicious environment. 
We observe that the accuracy of the classification model $f$, and correspondingly the efficiency and stability of the driving system, drops significantly in the presence of malicious attacks. 
We find system $\mathcal{A}(\zeta(f))$ is indeed vulnerable to malicious attacks. Now, we discuss the robustness of the robust system $\mathcal{A}(\zeta(g))$. 
We will empirically show the effectiveness of the robust perception model $g$ when defending against \emph{Stability Attacks} and \emph{Efficiency Attacks},
and provides the certificate of this robust model. \noindent\begin{table}[t]\small
    \centering
    \begin{tabular}{c|c}
    \toprule
         Label & Safe Set \\
    \midrule
         Sunny       & Sunny, Heavy Rain, Light Rain, Snow\\
         Light Rain  & Light Rain, Heavy Rain, Snow \\
         Heavy Rain  & Heavy Rain, Snow \\
         Snow        & Snow \\
         Asphalt     & Asphalt, Cobblestone, Sand \\
         Cobblestone & Cobblestone, Sand \\
         Sand        & Sand \\
    \bottomrule
    \end{tabular}
    \caption{Safety Class Set.}\vspace{-1cm}
    \label{tab:class_labels_safe}
\end{table} \noindent\paragraph{Certified Robustness}
We start with looking at the results of defending against \emph{Stability Attacks}.\noindent\begin{table*}[h]\small
    \centering \vspace{0.24cm}
    \begin{tabular}{c|c|c|c|c|c|c}
    \toprule
    Noise & 
    \multicolumn{2}{c}{Carla}   & \multicolumn{2}{c}{RTK} & \multicolumn{2}{c}{RobotCar} \\ 
    \cmidrule{2-7}
    $\sigma$          
    & \emph{Instability} & \emph{Velocity}
    & \emph{Instability} & \emph{Velocity}
    & \emph{Instability} & \emph{Velocity} \\
    \midrule
    0.25 &     0.0  &   16.78       &   0.0    &   15.97     & 0.0 & 16.00\\
    0.50 &     0.0 &   16.79        &   0.0    &   15.93    & 0.0 & 15.99\\
    1.00 &     0.0   &   16.65      &   0.0 &   15.83      & 0.0 &
    15.89\\
    \bottomrule
    \end{tabular}
    \caption{ Robustness : instability and efficiency }\vspace{-1cm}
    \label{tab:reg_stability}
\end{table*}
We start with discussing the robustness to stability attacks.
The attacker aims to increase the velocity by modifying the driving frame.
The driving system takes the modified driving frame as input and predicts a high and unsafe velocity. 
Specifically, in the classification problem, the attacker aims to flip the predicted label to a class corresponding to a higher cornering stiffness. 
The control algorithm $\mathcal{A}$ takes this incorrect range of cornering stiffness as input and controls the car at a dangerous speed.
In such a case, the car deviates from its safe trajectory significantly. 

As a defender, we want the car to drive safely in the malicious environment. 
To achieve this goal for different weathers or road types, we defined the safe set for each label in Table \ref{tab:class_labels_safe}. 
For example, the prediction of a corrupted \emph{rain} image could be \emph{snow}, yet not \emph{sunny}, to satisfy the safety criteria. 
A model $g$ is robust if the prediction from $g$ is in the safe set, given a corrupted image $x+\delta$. 
Table \ref{tab:class_safety} shows the efficacy of the robust model $g$ for safety guarantee. 
The numbers in the table are the instability measures. The smaller the number is, the more stable the driving system is.  

We conduct ablation analysis on different Gaussian noises $\sigma$ being added to the smooth function $g$. 
Combining Table \ref{tab:class_safety} and Table \ref{tab:class_vul_accuracy}, we observe that
(1) in \emph{carla dataset}, $\sigma=0.25$ is the best in terms of defending against \emph{Stability Attack}. 
The robust driving system decreases the instability from 200 (shown in Table \ref{tab:class_vul_accuracy}) to 6.36. 
(2) \emph{RTK dataset} is the least vulnerable dataset to malicious attacks, however, the attacker still increases the instability from 0.0 to 37.50. 
The robust driving system decreases this instability to 18.99 when $\sigma=0.25$. 
(3) in \emph{RobotCar dataset}, the robust model deceases the instability from 61.50 to 6.82 when $\sigma=0.5$.

After discussing the performance of the robust model $g$ in the malicious environment, we now show the certification of this robust model. 
Given a sensory input $x$, the smoothed perception model $g$ guarantees the predictions will be within a defined set of labels, if the attack is less than a radius $tau$. This certificate $tau$ is computed via randomized smoothing techniques.
In practice, as in \cite{cohen2019certified}, we apply Monte Carlo process to get an empirical bound. The exact values of these empirical bounds across different datasets are shown in Table 
\ref{tab:class_safety}.

Now we discuss the robustness to efficiency attacks
In this case, the attacker aims to decrease the car's velocity. 
Thus the car may drive unnecessarily cautious under this type of attack. 
Recall the result in Table \ref{tab:class_vul_accuracy}, this type of attack significantly hurts the driving efficiency. 
Specifically, the average speed across different weathers in \emph{Carla Dataset} drops from around 28 to 13. 
As a defender, we want to have the car driving efficiently meanwhile safely, i.e., a relatively high yet safe velocity.
In practice, the defender aims to have the same prediction with and without malicious attacks. 
In other words, the robust model $g$ is not effected by the malicious attacks, formally, $g(x+\delta) = f(x)$. 
Table \ref{tab:class_efficiency} shows the efficacy of the robust model. 
Comparing Table \ref{tab:class_efficiency} and  Table
\ref{tab:class_vul_accuracy}
We observe that the efficiency of $\mathcal{A}(\zeta(g))$ is increased by using $g$. 
Lastly, Table also \ref{tab:class_efficiency} gives the certificate of the defense strategy. 

\subsubsection{Road Friction Regression}
We use a \emph{ResNet-}style regression model. 
Specifically, we modify a \emph{ResNet50} classification model to a regressor by taking the convolutional layers in the classification model, 
and combining it with a linear support vector regression (\emph{SVR}) model. 
We take the weights of the convolutional layers from the trained classification model,
and use \emph{transfer learning} train the parameters in \emph{SVR}. 
We use datasets, \emph{Carla, RTK, Robotcar}, mentioned above. 
Recall that each image in these datasets corresponds to a class. 
This class contains weather and road-type information. 
We convert each class to a corner stiffness by referring to Table \ref{tab:cornering2}.
In particular, we use the mean of the corner stiffness interval in Table \ref{tab:cornering2} as a class's ground truth corner stiffness.\noindent\begin{table}[h]\small
    \centering
    \begin{tabular}{c|c|c|c}
    \toprule
             & \emph{Carla} & \emph{RTK} & \emph{RobotCar} \\
    \midrule
    MSE &     0.008  &     0.017       & 0.016 \\
    Efficiency(\emph{EA}) &     16.82  &     16.01       & 16.03 \\
    Instability(\emph{SA}) &     0.0  &     0.0       & 0.0 \\
    \bottomrule
    \end{tabular}
    \caption{The mean squared error (MSE) and driving performance of non-robust road friction regression in a benign environment.}\vspace{-0.7cm}
    \label{tab:MSE_no_attack}
\end{table}
\begin{table}[h]\small
    \centering
    \begin{tabular}{c|c|c|c}
    \toprule
             & \emph{Carla} & \emph{RTK} & \emph{RobotCar} \\
    \midrule
    MSE &     0.44  &  0.43     & 0.45 \\
    Efficiency(\emph{EA}) &     16.32  &     15.64       & 15.04 \\
    Instability(\emph{SA}) &     200.00  &     200.00       & 200.0 \\
    \bottomrule
    \end{tabular}
    \caption{The mean squared error (MSE) and driving performance of non-robust road friction regression in adversarial environment (\emph{PGD} attack with $\delta=255$.).
    }\vspace{-1.2cm}
    \label{tab:MSE_attack}
\end{table} 
\noindent\paragraph{Vulnerability} We measure the vulnerability of $f$.
Table \ref{tab:MSE_no_attack} shows the mean square error and performance of $f$ without any attacks. 
Table \ref{tab:MSE_attack} shows the mean square error and performance of $f$ under \emph{PGD} attack. 
From these two tables, we observe $f$ is malicious to adversarial attacks, as the MSE increases and performance decreases significantly.


\paragraph{Certified Robustness} At last, we evaluate the robustness of the $h$. 
Table \ref{tab:reg_stability} show the performance of the robust driving system. 
By looking at these tables, we find that combining the certified robust regression model $h$ with 
the robust control algorithm $\mathcal{A}$ guarantees the stability and efficiency in the malicious environment.

\section{Conclusion}
We are the first work combining certified robustness of predictions concerning input adversarial perturbations and robust control. 
We evaluate our proposed approach by applying it to adaptive vehicle control and 
empirically show our approach significantly increases the stability and efficiency of a self-driving car compared with the non-robust baseline counterpart in the malicious environment.



\bibliographystyle{plain}
\bibliography{references.bib}

\end{document}